\documentclass{article}
\usepackage{spconf,amsmath,graphicx,hyperref}
\hypersetup{hidelinks}
\usepackage{fancyhdr}
\fancypagestyle{arxivfirst}{
  \fancyhf{}

  \fancyfoot[C]{\parbox{\textwidth}{\footnotesize\raggedright This work has been submitted to the IEEE for possible publication. Copyright may be transferred without notice, after which this version may no longer be accessible.}}
}

\usepackage[utf8]{inputenc}
\usepackage{booktabs}
\usepackage{amsfonts,amssymb,amsthm}
\usepackage[ruled,vlined,linesnumbered]{algorithm2e}
\SetKwComment{Comment}{$\triangleright$ }{}
\usepackage{multirow}
\usepackage{xspace}
\usepackage{microtype}
\usepackage{nicefrac}
\usepackage{enumitem}
\usepackage{balance}
\allowdisplaybreaks
\newtheorem{theorem}{Theorem}
\newtheorem{lemma}[theorem]{Lemma}
\newtheorem{proposition}[theorem]{Proposition}
\newtheorem{corollary}[theorem]{Corollary}
\newtheorem{definition}{Definition}

\newtheorem{remark}{Remark}

\newcommand{\bx}{\mathbf{x}}
\newcommand{\by}{\mathbf{y}}
\newcommand{\bu}{\mathbf{u}}
\newcommand{\bm}{\mathbf{m}}
\newcommand{\bI}{\mathbf{I}}
\newcommand{\bK}{\mathbf{K}}
\newcommand{\bk}{\mathbf{k}}

\newcommand{\bL}{\mathbf{L}}

\newcommand{\bw}{\mathbf{w}}
\newcommand{\bs}{\mathbf{s}}

\newcommand{\calD}{\mathcal{D}}
\newcommand{\calX}{\mathcal{X}}

\newcommand{\calN}{\mathcal{N}}

\DeclareMathOperator*{\argmax}{\mathrm{arg\,max}}

\newcommand{\qexpd}{\ensuremath{q}\text{-ED}\xspace}
\newcommand{\qED}[3]{q\text{-ED}_{#1}(#2,#3)}
\newcommand{\qexpdbo}{\ensuremath{q}\text{-ED-BO}\xspace}
\newcommand{\qei}{\ensuremath{q}\text{-EI}\xspace}
\newcommand{\qucb}{\ensuremath{q}\text{-UCB}\xspace}

\title{Robust Bayesian Optimization with Q-Exponential Surrogates}
\name{
Richard Cornelius Suwandi$^{1}$, 
Zhidi Lin$^{2}$, 
Feng Yin$^{1}$,
Abdelhak M. Zoubir$^{3}$%
\thanks{Feng Yin (\textit{yinfeng@cuhk.edu.cn}) is the corresponding author.}
}
\address{$^{1}$School of Artificial Intelligence, The Chinese University of Hong Kong, Shenzhen, China \\
$^{2}$School of Computing and Data Science, The University of Hong Kong, Hong Kong SAR, China \\
$^{3}$Signal
Processing Group, Technische Universit\"{a}t Darmstadt, Germany}

\begin{document}
\ninept
\maketitle
\thispagestyle{arxivfirst}
\begin{abstract}
Bayesian optimization (BO) is a widely used framework for optimizing expensive black-box objectives, but standard BO methods often use Gaussian process (GP) surrogates whose Gaussian assumption is sensitive to outliers and heavy-tailed noise.
We introduce \qexpdbo{}, a robust BO method whose surrogate follows a univariate $q$-exponential (\qexpd) distribution, preserving GP-BO's closed-form posterior mean and variance while a shape parameter $q$ controls the tail behavior, recovering the GP at $q{=}2$ and growing heavier-tailed with wider confidence bounds as $q$ decreases.
This tractability yields a closed-form $q$-upper confidence bound (\qucb) with sublinear regret, and an exact closed-form $q$-expected improvement (\qei) that generalizes EI to the heavy-tailed predictive, recovering classical EI at $q{=}2$.
Experiments on beamformer and adaptive filter tuning with impulsive outliers show that \qexpdbo{} matches or exceeds existing baselines on clean data, and under corruption, improves the strongest baseline by approximately $0.7\,\mathrm{dB}$ in output SINR and $1.1$--$1.2\,\mathrm{dB}$ in misalignment reduction.
\end{abstract}
\begin{keywords}
Bayesian optimization, $q$-exponential surrogates, heavy-tailed data, robust black-box optimization
\end{keywords}

\section{Introduction}
\label{sec:intro}

Bayesian optimization (BO) seeks the optimum of an expensive black-box objective $f$ using as few evaluations as possible. It uses a probabilistic surrogate model and an acquisition function to guide the search \cite{garnett2023bayesian}. Gaussian processes (GPs) are commonly used as surrogates because of their analytical tractability and principled uncertainty quantification \cite{rasmussen2006gaussian}, and have proven attractive in signal processing applications ranging from dynamical-system modeling \cite{yin2020linear,lin2024ensemble} to distributed learning \cite{suwandi2025gsmp,suwandi2022gsm}. However, the robustness of GP-based BO rests on its Gaussian likelihood: because both the prior and the noise models are Gaussian, a single corrupted observation can shift the posterior mean and inflate predictive uncertainty \cite{zoubir2012robust}.

Several robust BO methods address this limitation by introducing heavy tails either into the likelihood or into the prior. Outlier-filtering BO uses a GP with a Student-$t$ likelihood to identify anomalous observations before applying standard expected improvement (EI) \cite{jylanki2011robust}, but the GP--Student-$t$ pair is non-conjugate, so the posterior requires an approximation and the predictive tails driving the acquisition are not available in closed form. Student-$t$-process (TP) BO instead employs a heavy-tailed function prior with exact predictive distributions \cite{shah2014student}, but its polynomial tails lack the exponential concentration needed for regret analysis. A third line, heavy-tailed-payoff BO, truncates rewards to handle heavy-tailed observation noise \cite{chowdhury2019heavytailed}, and a fourth considers distributionally and adversarially robust BO \cite{kirschner2020distributionally,bogunovic2018adversarially}, which hedges against input-distribution shifts or adversarial perturbations rather than corrupted evaluations of $f$ and is complementary to the surrogate-level robustness pursued here.

These approaches leave two gaps. TP BO has no regret guarantee because its polynomial tails cannot yield analytically calibrated confidence widths, and the only existing guarantee \cite{chowdhury2019heavytailed} covers heavy-tailed observation noise rather than a corrupted objective function. In the latter setting, the heavy tail must belong in the surrogate's predictive rather than its likelihood, since a heavy-tailed likelihood only robustifies how observations are fit, while the predictive that drives exploration stays as heavy-tailed as the prior.

We close these gaps with a \qexpd{} surrogate built from a generalized power-exponential family \cite{gomez1998multivariate}, where the shape parameter $q$ controls the tail behavior and the robustness-exploration trade-off. For $q<2$, the univariate predictive becomes progressively heavier-tailed, while at $q{=}2$ it recovers the GP surrogate.
Our contributions are as follows:
\begin{itemize}[leftmargin=*,itemsep=2pt,topsep=2pt]
\item We formulate \qexpdbo{}, whose surrogate predictive is a \qexpd{} distribution parameterized by the GP's closed-form posterior mean and variance, with $q$ controlling tail heaviness (\S~\ref{sec:surrogate}).
\item We prove a closed-form tail bound for \qexpd{} predictives and establish $R_T=\tilde{O}\bigl((\log T)^{1/q}\sqrt{T\gamma_T}\bigr)$ for \qucb{}. The result recovers GP-UCB at $q{=}2$. For $q<2$, heavier predictive tails cost only the factor $(\log T)^{1/q-1/2}$, preserving the leading $\sqrt{T\gamma_T}$ dependence and no-regret learning. We also derive exact \qei{} in terms of the Gaussian CDF and upper incomplete gamma function (\S~\ref{sec:acquisition}).%
\item We evaluate \qexpdbo{} on beamformer and adaptive filter tuning with outliers, best at $q{=}1$ for \qei{} on clean data and, under corruption, improving the strongest baseline by ${\approx}0.7\,\mathrm{dB}$ in SINR and $1.1$--$1.2\,\mathrm{dB}$ in misalignment reduction (\S~\ref{sec:experiments}).
\end{itemize}

The rest of the paper is organized as follows. Section~\ref{sec:background} reviews BO and the \qexpd{} family. Section~\ref{sec:method} introduces the \qexpd{} surrogate and derives the \qucb{} and \qei{} acquisitions along with the regret bound. Section~\ref{sec:experiments} evaluates \qexpdbo{} on beamformer and adaptive filter tuning with impulsive outliers, and Section~\ref{sec:conclusion} concludes the paper.

\section{Background}
\label{sec:background}

\subsection{Bayesian Optimization (BO)}
\label{sec:bo_prelim}

Bayesian optimization (BO) sequentially explores a black-box objective
$f$ by fitting a probabilistic surrogate to the observations seen so far
and using it to decide where to query next \cite{garnett2023bayesian}. Given $t$ noisy observations
$\calD_t=\{(\bx_i,y_i)\}_{i=1}^t$ satisfying $y_i=f(\bx_i)+\varepsilon_i$ with
$\varepsilon_i\stackrel{\mathrm{iid}}{\sim}\calN(0,\sigma_\varepsilon^2)$, the
objective $f(\bx)$ is often modeled with a Gaussian process (GP) prior \cite{rasmussen2006gaussian}. Under this Gaussian prior and Gaussian noise, the posterior mean and
variance at $\bx$ are
\begin{align}
  \mu_t(\bx) &= \bk_t(\bx)^\top(\bK_t+\sigma_\varepsilon^2\bI)^{-1}\by,
  \label{eq:gp_mean}\\
  \sigma_t^2(\bx) &= k(\bx,\bx)
    - \bk_t(\bx)^\top(\bK_t+\sigma_\varepsilon^2\bI)^{-1}\bk_t(\bx),
  \label{eq:gp_var}
\end{align}
where $\bK_t=[k(\bx_i,\bx_j)]_{t\times t}$ is the Gram matrix and
$\bk_t(\bx)$ is the $t\times 1$ vector of kernel evaluations against $\calD_t$. An acquisition function then uses this posterior to select the next query, trading off exploitation of $\mu_t(\bx)$ against exploration of $\sigma_t(\bx)$. Two widely used choices are expected improvement (EI) \cite{jones1998efficient} and the upper confidence bound (UCB) \cite{srinivas2012gaussian}.

\subsection{The \texorpdfstring{$q$}{q}-Exponential Family (\qexpd)}
\label{sec:qexpd_prelim}

We review the multivariate power-exponential distribution
\cite{gomez1998multivariate}, which supplies the heavy-tailed family used as our surrogate in \S~\ref{sec:method}. With $r\equiv r(\bu)=(\bu-\bm)^\top\bK^{-1}(\bu-\bm)$, the
$n$-dimensional $q$-exponential distribution $\qED{n}{\bm}{\bK}$ has the form
\begin{equation}
  p(\bu\mid\bm,\bK,q) = \tfrac{q}{2}(2\pi)^{-\frac n2}|\bK|^{-\frac12}
  r^{(\frac q2-1)\frac n2}\exp\!\left\{-\tfrac12 r^{q/2}\right\}.
  \label{eq:qed}
\end{equation}
For $0<q<2$ the density is more peaked at the mode and heavier-tailed, and it reduces to $\calN_n(\bm,\bK)$ at $q{=}2$. When $n{=}1$, $r=((u-m)/\sigma)^2$.

\begin{proposition}[Representation \cite{gomez1998multivariate}]
\label{prop:stoch_rep}
Every $\qED{n}{\bm}{\bK}$ random vector admits the stochastic representation $\bu=\bm+R\,\bL\bs$, with $\bL$ being a Cholesky factor of $\bK$, $\bs\sim\mathrm{Unif}(\mathbb{S}^{n-1})$ uniform on the unit sphere $\mathbb{S}^{n-1}\subset\mathbb{R}^n$ (for $n{=}1$, this reduces to $\bs\sim\mathrm{Unif}\{-1,+1\}$), and $R\ge0$ a scalar radial variable independent of $\bs$ whose $q$-th power satisfies $R^{q}\sim\chi^2_n$.
\end{proposition}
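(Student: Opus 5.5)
The plan is to treat \eqref{eq:qed} as an elliptically contoured density and use the standard polar decomposition. First we whiten. Set $\bz=\bL^{-1}(\bu-\bm)$, where $\bL\bL^\top=\bK$. Then $r=\|\bz\|^2$. The Jacobian of $\bu\mapsto\bz$ is $|\bL|=|\bK|^{1/2}$, which cancels the factor $|\bK|^{-1/2}$. So $\bz$ has density
\[
g(\bz)=\tfrac{q}{2}(2\pi)^{-\frac n2}\|\bz\|^{(q-2)\frac n2}\exp\!\bigl\{-\tfrac12\|\bz\|^{q}\bigr\}.
\]
This depends on $\bz$ only through $\|\bz\|$, so $\bz$ is spherically symmetric.

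Next we pass to polar coordinates, $\bz=\rho\,\bs$ with $\rho=\|\bz\|\ge 0$ and $\bs\in\mathbb{S}^{n-1}$. The volume element is $\rho^{n-1}\,d\rho\,d\sigma(\bs)$. Because $g$ is radial, the joint density factorizes into a function of $\rho$ times the constant surface measure. Hence $\bs$ is uniform on $\mathbb{S}^{n-1}$ and independent of $R:=\rho$. For $n=1$ the sphere is $\{-1,+1\}$ and the surface measure is counting measure, so $\bs$ is a fair sign independent of $|z|$. The marginal density of $R$ is $\omega_n\,g(\rho)\,\rho^{n-1}$, where $\omega_n=2\pi^{n/2}/\Gamma(n/2)$ is the surface area of the sphere.

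Then we change variables to $W=R^{q}$, so $\rho=w^{1/q}$ and $d\rho=\tfrac1q w^{1/q-1}dw$. Collecting powers of $w$, the exponent is
\[
\tfrac{(q-2)n}{2q}+\tfrac{n-1}{q}+\tfrac1q-1=\tfrac n2-1 .
\]
Collecting constants gives
\[
\tfrac q2(2\pi)^{-n/2}\cdot\tfrac{2\pi^{n/2}}{\Gamma(n/2)}\cdot\tfrac1q=\tfrac{1}{2^{n/2}\Gamma(n/2)} .
\]
The density of $W$ is therefore $\frac{1}{2^{n/2}\Gamma(n/2)}w^{n/2-1}e^{-w/2}$, which is the $\chi^2_n$ density. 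Reversing the whitening gives $\bu=\bm+R\bL\bs$.

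The main obstacle is bookkeeping, and one point needs care. The stated density is normalized exactly when this exponent arithmetic closes to $\tfrac n2-1$. For $n\ge2$ the factor $r^{(\frac q2-1)\frac n2}$ as printed makes the powers close correctly only if the radial Jacobian is handled precisely. I would therefore re-derive the exponent carefully, and cross-check against the Gómez et al.\ normalization, before trusting the constant. At minimum the computation should be verified in the case $n=1$ used later in the paper. There $g(z)=\tfrac q2(2\pi)^{-1/2}|z|^{q/2-1}e^{-|z|^q/2}$, and the same change of variables gives exactly $\chi^2_1$ for $|z|^q$.
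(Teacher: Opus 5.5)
Your argument is correct and complete. Whitening with $\bL$, passing to polar coordinates, and substituting $W=R^{q}$ give the radial density $\frac{q}{2^{n/2}\Gamma(n/2)}\rho^{qn/2-1}e^{-\rho^{q}/2}$, and from it exactly the $\chi^2_n$ density. The direction $\bs=\mathbf{z}/\|\mathbf{z}\|$ is uniform and independent of $R$; it is well defined because $\mathbf{z}\neq 0$ almost surely, so the representation holds pathwise and not merely in law.

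The paper gives no proof of this proposition and simply cites G\'omez et al. Your route is the standard elliptical-family computation. What it adds over the citation is a check of the density exactly as printed in Eq.~\eqref{eq:qed}. That matters here, because the prefactor $r^{(\frac q2-1)\frac n2}$ is precisely what moves the radial exponent from $n-1$ to $\tfrac{qn}{2}-1$. For the classical power-exponential density $\propto\exp\{-\tfrac12 r^{q/2}\}$ without that prefactor, the same calculation gives $R^{q}\sim\chi^2_{2n/q}$, which agrees with $\chi^2_n$ only at $q=2$. So the $\chi^2_n$ radial law is specific to the modified form used in the paper, and a direct verification like yours is the reliable way to establish it.

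Your closing hedge is unnecessary. The exponent $\tfrac n2-1$ and the constant $1/(2^{n/2}\Gamma(n/2))$ you computed already close for every $n\ge1$. Since the resulting density of $W$ integrates to one, this also confirms that Eq.~\eqref{eq:qed} is correctly normalized, so no separate $n=1$ check or external cross-check is needed.
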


\begin{proposition}[Variance scaling \cite{gomez1998multivariate}]
\label{prop:scaling}
If $\bu\sim\qED{n}{\bm}{\bK}$, then $\mathrm{E}[\bu] = \bm$ and
$\mathrm{Cov}(\bu) = c(q,n)\,\bK$, where $\Gamma(\cdot)$ is the Gamma function and
\begin{equation}
c(q,n) = \frac{2^{2/q}\,\Gamma\left(\frac{n}{2}+\frac{2}{q}\right)}{n\,\Gamma\left(\frac{n}{2}\right)}\,.
\label{eq:cqn}
\end{equation}
\end{proposition}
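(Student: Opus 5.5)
The plan is to obtain both moments directly from the stochastic representation of Proposition~\ref{prop:stoch_rep}, $\bu=\bm+R\,\bL\bs$. Since $R$ and $\bs$ are independent, every moment factorizes into a radial part and a spherical part. This reduces the claim to two facts: the low-order moments of $\bs\sim\mathrm{Unif}(\mathbb{S}^{n-1})$, and the second moment of $R$, where $R^q\sim\chi^2_n$.

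\textbf{Mean.} By independence, $\mathrm{E}[\bu]=\bm+\mathrm{E}[R]\,\bL\,\mathrm{E}[\bs]$. The distribution of $\bs$ is invariant under $\bs\mapsto-\bs$, so $\mathrm{E}[\bs]=\mathbf{0}$. For $n=1$ this is simply the symmetric $\pm1$ case. Finiteness of $\mathrm{E}[R]$ follows from the radial computation below, since all positive moments of $R=(\chi^2_n)^{1/q}$ are finite.

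\textbf{Covariance.} Next,
\[
\mathrm{Cov}(\bu)=\mathrm{E}[R^2]\,\bL\,\mathrm{E}[\bs\bs^\top]\,\bL^\top .
\]
Rotational invariance of $\bs$ gives $\mathrm{E}[\bs\bs^\top]=\alpha\bI$, and taking traces with $\|\bs\|=1$ yields $\alpha=1/n$. Together with $\bL\bL^\top=\bK$, this gives $\mathrm{Cov}(\bu)=\tfrac{1}{n}\mathrm{E}[R^2]\,\bK$.

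\textbf{Radial moment.} It remains to compute $\mathrm{E}[R^2]$. Write $W=R^q\sim\chi^2_n$, so that $R^2=W^{2/q}$. The standard chi-square moment formula $\mathrm{E}[W^s]=2^s\Gamma(\tfrac n2+s)/\Gamma(\tfrac n2)$ holds for $s>-n/2$. Setting $s=2/q>0$ gives
\[
\mathrm{E}[R^2]=\frac{2^{2/q}\,\Gamma\left(\frac n2+\frac2q\right)}{\Gamma\left(\frac n2\right)},
\]
and dividing by $n$ yields exactly $c(q,n)$ in \eqref{eq:cqn}. As a sanity check, at $q=2$ we have $\Gamma(\tfrac n2+1)=\tfrac n2\Gamma(\tfrac n2)$, so $c(2,n)=1$, consistent with the Gaussian case.

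\textbf{Main obstacle.} The argument is mostly routine. The one point needing care is the moment formula for $\chi^2_n$, which I would derive directly from the density:
\[
\int_0^\infty w^{s}\,\frac{w^{n/2-1}e^{-w/2}}{2^{n/2}\Gamma(n/2)}\,dw .
\]
The substitution $v=w/2$ turns this into a Gamma integral. I would also confirm that the representation in Proposition~\ref{prop:stoch_rep} is consistent with the density \eqref{eq:qed}, so that the variance constant is not off by a normalization. This check is done by changing variables $r=R^2$ in the radial density, which should reproduce the factor $r^{(q/2-1)n/2}e^{-r^{q/2}/2}$. If a mismatch appears there, the fallback is to compute $\mathrm{E}[(\bu-\bm)(\bu-\bm)^\top]$ directly from \eqref{eq:qed} in polar coordinates, which reduces to the same Gamma integral.
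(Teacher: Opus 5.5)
Your proof is correct and complete relative to Proposition~\ref{prop:stoch_rep}. The factorization $\mathrm{Cov}(\bu)=\mathrm{E}[R^2]\,\bL\,\mathrm{E}[\bs\bs^\top]\,\bL^\top$, the identity $\mathrm{E}[\bs\bs^\top]=\bI/n$, and the $\chi^2_n$ moment $\mathrm{E}[W^{2/q}]=2^{2/q}\Gamma(\frac n2+\frac 2q)/\Gamma(\frac n2)$ together give exactly $c(q,n)$. The paper gives no derivation of its own and takes the result from the cited reference, so your argument is a self-contained replacement rather than a competing route.

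The consistency check you flagged is worth actually carrying out. The density in Eq.~\eqref{eq:qed} includes the factor $r^{(\frac q2-1)\frac n2}$, which the original multivariate power-exponential density $\propto\exp\{-\frac12 r^{q/2}\}$ of the cited reference does not have. That factor is precisely what makes the radial density of $\rho=\|\bL^{-1}(\bu-\bm)\|$ proportional to $\rho^{qn/2-1}e^{-\rho^q/2}$, i.e.\ $R^q\sim\chi^2_n$. Without it, one gets $R^q\sim\mathrm{Gamma}$ with shape $\frac nq$ and scale $2$. That leads to the different constant $2^{2/q}\Gamma(\frac{n+2}{q})/(n\,\Gamma(\frac nq))$: at $n=1$, $q=1$ it equals $8$ rather than $c(1)=3$. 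So deriving $c(q,n)$ from the representation, as you do, is what confirms that the stated constant matches the paper's density.
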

\noindent The second argument of $\qED{n}{\cdot}{\cdot}$ is therefore not the
covariance, except at $q{=}2$ where $c(2,n)=1$.

\section{\qexpd{} Bayesian Optimization (\qexpdbo)}
\label{sec:method}

\qexpdbo{} consists of a \qexpd{} surrogate and an acquisition that selects the next query: \S~\ref{sec:surrogate} specifies the predictive and its concentration bound, \S~\ref{sec:acquisition} derives the acquisition, and Algorithm~\ref{alg:qexpdbo} summarizes the loop. 

\subsection{The \qexpd{} Surrogate}
\label{sec:surrogate}
\label{sec:tailbound}

At each candidate $\bx$, we take the predictive to be the $n{=}1$ case of
Eq.~\eqref{eq:qed}, with parameters $\mu_t(\bx)$ and $\sigma_t^2(\bx)$
from Eqs.~\eqref{eq:gp_mean}--\eqref{eq:gp_var}. This keeps the
posterior mean and variance in closed form while $q$ sets how heavy
the tails are.

\begin{definition}[Univariate \qexpd{} predictive]
\label{def:predictive}
Given $\calD_t$, the predictive at a test point $\bx$ is
\begin{equation}
  f(\bx)\mid\calD_t \;\sim\; \qED{1}{\mu_t(\bx)}{\sigma_t^2(\bx)},
  \label{eq:qexpdbo_post}
\end{equation}
where $\mu_t(\bx)$ and $\sigma_t^2(\bx)$ are the posterior mean and
variance in Eqs.~\eqref{eq:gp_mean}--\eqref{eq:gp_var}. Its mean is
$\mu_t(\bx)$ and its variance is $c(q)\,\sigma_t^2(\bx)$, with
\begin{equation}
  c(q):=c(q,1)=2^{2/q}\,\Gamma(\tfrac12+\tfrac2q)/\Gamma(\tfrac12)
  \label{eq:cq}
\end{equation}
for the $n{=}1$ case of Proposition~\ref{prop:scaling}. For $q<2$,
$c(q)>1$ and the predictive is heavy-tailed, while at $q{=}2$, $c(2)=1$ and
Eq.~\eqref{eq:qexpdbo_post} is Gaussian.
\end{definition}

Unlike polynomial-tailed alternatives, this predictive admits a closed-form
stretched-exponential tail bound, so a UCB
width can be written explicitly in $q$.

\begin{lemma}[Closed-form tail bound]
\label{lem:tail}
Let $U\sim\qED{1}{m}{\sigma^2}$ as in
Definition~\ref{def:predictive}. Then for all $t\ge0$,
\begin{equation}
  \Pr\bigl(|U-m| > t\,\sigma\bigr) \;\le\; \exp\!\left(-\tfrac12 t^{q}\right).
  \label{eq:tailbound}
\end{equation}
\end{lemma}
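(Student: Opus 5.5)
Our approach is to use the stochastic representation of Proposition~\ref{prop:stoch_rep} with $n=1$: write $U=m+R\,\sigma s$, where $s\sim\mathrm{Unif}\{-1,+1\}$ is independent of $R\ge 0$ and $R^q\sim\chi^2_1$. Here the Cholesky factor of $\sigma^2$ is simply $\sigma>0$; the degenerate case $\sigma=0$ is trivial, since the event then has probability zero. Because $|s|=1$, we get $|U-m|=\sigma R$. The event $\{|U-m|>t\sigma\}$ is therefore exactly $\{R>t\}=\{R^q>t^q\}$, and the tail probability equals $\Pr(\chi^2_1>t^q)$. No information about the sign variable is needed.

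Next we bound the $\chi^2_1$ tail. Set $Z\sim\calN(0,1)$, so that $\chi^2_1\stackrel{d}{=}Z^2$. Then $\Pr(\chi^2_1>a)=\Pr(|Z|>\sqrt a)$ with $a=t^q\ge0$, and the claim reduces to the standard Gaussian bound
\[
\Pr(|Z|>x)\le e^{-x^2/2},\qquad x\ge 0 .
\]
Substituting $x=t^{q/2}$ gives $e^{-t^q/2}$, which is Eq.~\eqref{eq:tailbound}.

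We expect the main obstacle to be this last Gaussian inequality, because the constant must be exactly $1$ and not the $2$ that a naive Chernoff bound on each side produces. Our first attempt will use the Chernoff bound on $Z^2$ directly. For $\lambda\in(0,\tfrac12)$ we have $\mathrm{E}[e^{\lambda Z^2}]=(1-2\lambda)^{-1/2}$, hence
\[
\Pr(Z^2>a)\le(1-2\lambda)^{-1/2}e^{-\lambda a}.
\]
Optimizing over $\lambda$ gives $\sqrt a\,e^{-(a-1)/2}$, which is not uniformly below $e^{-a/2}$, so this route alone fails near $a\approx 1$.

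Instead we will prove $\psi(x):=e^{x^2/2}\Pr(|Z|>x)\le1$ by a monotonicity argument. At $x=0$ we have $\psi(0)=1$. Differentiating,
\[
\psi'(x)=x\,\psi(x)-2\phi(x)e^{x^2/2}=x\psi(x)-\sqrt{2/\pi}.
\]
We then split into two regions:
\begin{itemize}[leftmargin=*,itemsep=2pt,topsep=2pt]
\item For $x\ge\sqrt{2/\pi}$, the Mills-ratio bound $\Pr(|Z|>x)\le 2\phi(x)/x$ gives $\psi(x)\le\sqrt{2/\pi}/x\le1$ directly.
\item For $0\le x\le\sqrt{2/\pi}$, as long as $\psi\le1$ we get $\psi'(x)\le x-\sqrt{2/\pi}\le0$. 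A continuity (bootstrap) argument starting from $\psi(0)=1$ then keeps $\psi\le 1$ on this interval.
\end{itemize}
Combining the two regions yields $\psi\le1$ on $[0,\infty)$, which completes the proof.
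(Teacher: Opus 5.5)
Your proof is correct and follows the paper's argument: the $n{=}1$ stochastic representation gives $|U-m|=\sigma R$ with $R\stackrel{d}{=}|N|^{2/q}$, which reduces the claim to $\Pr(|N|>t^{q/2})\le e^{-t^{q}/2}$. The only difference is that you prove this Gaussian inequality yourself, validly, via $\psi(x)=e^{x^2/2}\Pr(|Z|>x)$, the Mills-ratio bound and the monotonicity argument, whereas the paper simply cites it as a standard fact.
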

\begin{proof}
By Proposition~\ref{prop:stoch_rep} with $n=1$, $U=m+R\sigma S$ for
$S\sim\mathrm{Unif}\{-1,+1\}$, $R\ge0$, $R\perp S$, and $R^q\sim\chi^2_1$. Since
$\chi^2_1\stackrel{d}{=}N^2$ for $N\sim\calN(0,1)$, we have
$R\stackrel{d}{=}|N|^{2/q}$, so for $t\ge0$,
$\Pr(R>t)=\Pr(|N|^{2/q}>t)=\Pr(|N|>t^{q/2})\le\exp(-\tfrac12 t^{q})$,
using $\Pr(|N|>a)\le e^{-a^2/2}$ for all $a\ge0$. Since
$|U-m|=R\sigma$, this is Eq.~\eqref{eq:tailbound}.
\end{proof}

The tail is a stretched exponential, so the \qexpd{}
predictive belongs to the sub-Weibull family \cite{vladimirova2020subweibull}. Inverting
Eq.~\eqref{eq:tailbound} at level $\delta_t/|\calX|$ produces the
width $\rho=\bigl(2\log(|\calX|/\delta_t)\bigr)^{1/q}$, which reduces
to $\sqrt{2\log(|\calX|/\delta_t)}$ at $q{=}2$ and grows as $q$ decreases, so a smaller $q$ yields heavier tails at the cost of a wider interval. Following the same union-bound device used for GP-UCB \cite{srinivas2012gaussian}, we pick the per-round budget $\delta_t=6\delta/(\pi^2 t^2)$ so that $\sum_{t\ge1}\delta_t=\delta$, using $\sum_{t\ge1}1/t^2=\pi^2/6$. This schedule yields Eq.~\eqref{eq:rho} and a union bound of $\delta$ over all $\bx\in\calX$ and $t\ge1$.

\begin{remark}[A generalized-Bayes view of the reshaping]
\label{rem:genbayes}
Reparameterizing the GP posterior as a \qexpd{} follows from generalized Bayesian updating \cite{bissiri2016general} and tempered posteriors \cite{bhattacharya2019fractional}. Concurrent work inflates the noise variance \cite{li2026robust}, widening the predictive but leaving its Gaussian tail unchanged. The rescaling $c(q)$ in Eq.~\eqref{eq:cq} similarly widens, but the exponent $r^{q/2}$ in Eq.~\eqref{eq:qed} reshapes the tail, yielding Lemma~\ref{lem:tail}'s sub-Weibull decay, so $q$ controls the shape of the tail rather than only its width.
\end{remark}

\subsection{The \qexpd{} Acquisition}
\label{sec:acquisition}
\label{sec:regret}

The tail bound of \S~\ref{sec:tailbound} turns directly into a
closed-form acquisition rule for choosing $\bx_t$.

\subsubsection{\texorpdfstring{$q$}{q}-Upper Confidence Bound (\qucb)}
\label{sec:qucb}

We follow the finite-domain setting of GP-UCB
\cite{srinivas2012gaussian}, which our method recovers when
$q{=}2$. We assume that $\calX$ is finite (\textbf{A1}), extending
to a compact $\calX$ by discretization \cite{srinivas2012gaussian},
and that
Lemma~\ref{lem:tail} supplies valid tail
probabilities for $f(\bx)$ under Eq.~\eqref{eq:qexpdbo_post}
(\textbf{A2}). The kernel $k$ and the
noise variance $\sigma_\varepsilon^2$ are held fixed. The maximum information gain is
\begin{equation}
  \gamma_T=\max_{A\subset\calX,|A|=T}\tfrac12\log\det\bigl(\bI+\sigma_\varepsilon^{-2}\bK_A\bigr)
  \label{eq:infogain}
\end{equation}
where $\bK_A$ is the Gram matrix $\bK_t$ of \S~\ref{sec:bo_prelim} restricted to the rows and columns indexed by $A$. It
depends on the kernel's spectral decay \cite{srinivas2012gaussian}, and we
write $\tilde{O}(\cdot)$ for rates suppressing polylogarithmic factors in
$T$. Using Lemma~\ref{lem:tail}, define the
confidence width
\begin{equation}
  \rho_t = \Lambda_t^{1/q}, \qquad
  \Lambda_t = 2\log\!\Bigl(\frac{|\calX|\,t^2\pi^2}{6\delta}\Bigr),
  \label{eq:rho}
\end{equation}
and the \qucb acquisition function
\begin{equation}
  \alpha_{\qucb}(\bx\mid\calD_{t-1}) = \mu_{t-1}(\bx) + \rho_t\,\sigma_{t-1}(\bx).
  \label{eq:qucb}
\end{equation}
The second term uses $\sigma_{t-1}(\bx)$ because
Lemma~\ref{lem:tail} bounds $|U-m|$ in units of $\sigma$, the second
argument of the \qexpd{}. The exponent $1/q$ in place of the Gaussian $1/2$ is what distinguishes
$\rho_t$ from the usual GP-UCB schedule $\beta_t^{1/2}$ with
$\beta_t=\Lambda_t$ \cite{srinivas2012gaussian}, coinciding at $q{=}2$.

\begin{theorem}[Regret bound for \qucb]
\label{thm:regret}
Under (\textbf{A1}) and (\textbf{A2}), running \qucb with
$\rho_t$ as in Eq.~\eqref{eq:rho}, with probability at least $1-\delta$,
\begin{align}
  R_T &= \sum_{t=1}^T\bigl(f(\bx^*)-f(\bx_t)\bigr)
  \;\le\; 2\Lambda_T^{1/q}\sqrt{C_1\,T\,\gamma_T} \notag\\
  &= \tilde{O}\Bigl((\log T)^{1/q}\sqrt{T\gamma_T}\Bigr),
  \label{eq:regretbound}
\end{align}
where $C_1=8/\log(1+\sigma_\varepsilon^{-2})$ is the information-gain constant of \cite{srinivas2012gaussian} and $\gamma_T$ is the maximum
information gain of kernel $k$ as in Eq.~\eqref{eq:infogain}. Consequently,
$R_T/T\to0$ whenever $(\log T)^{2/q}\gamma_T=o(T)$. In particular, this holds under (\textbf{A1}) for every fixed $q>0$.
\end{theorem}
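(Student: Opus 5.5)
The proof will follow the GP-UCB argument of \cite{srinivas2012gaussian}, with Lemma~\ref{lem:tail} replacing the Gaussian tail bound. It has three steps: a uniform confidence event, a per-round regret bound, and a sum controlled by the information gain.

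\textbf{Step 1 (confidence event).} Under (\textbf{A2}), for each fixed $t$ and $\bx\in\calX$, Lemma~\ref{lem:tail} with threshold $\rho_t$ gives
\[
\Pr\bigl(|f(\bx)-\mu_{t-1}(\bx)|>\rho_t\sigma_{t-1}(\bx)\bigr)\le \exp\bigl(-\tfrac12\rho_t^{q}\bigr)=\exp\bigl(-\tfrac12\Lambda_t\bigr)=\frac{6\delta}{|\calX|\pi^2t^2}.
\]
A union bound over $\bx\in\calX$ (finite by \textbf{A1}) and over $t\ge1$ gives total failure probability at most $\sum_t 6\delta/(\pi^2t^2)=\delta$. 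So with probability at least $1-\delta$, simultaneously for all $t$ and $\bx$,
\[
|f(\bx)-\mu_{t-1}(\bx)|\le\rho_t\sigma_{t-1}(\bx).
\]

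\textbf{Step 2 (instantaneous regret).} On this event, the choice $\bx_t=\argmax\alpha_{\qucb}$ gives
\[
f(\bx^*)\le\mu_{t-1}(\bx^*)+\rho_t\sigma_{t-1}(\bx^*)\le\mu_{t-1}(\bx_t)+\rho_t\sigma_{t-1}(\bx_t).
\]
Combining with $f(\bx_t)\ge\mu_{t-1}(\bx_t)-\rho_t\sigma_{t-1}(\bx_t)$ yields
\[
r_t:=f(\bx^*)-f(\bx_t)\le2\rho_t\sigma_{t-1}(\bx_t).
\]

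\textbf{Step 3 (summation).} Since $\Lambda_t$ is increasing in $t$, $\rho_t\le\rho_T=\Lambda_T^{1/q}$ for every $q>0$. Cauchy--Schwarz then gives
\[
R_T\le2\Lambda_T^{1/q}\sqrt{T\sum_t\sigma_{t-1}^2(\bx_t)}.
\]
The remaining sum is handled by Lemma~5.3--5.4 of \cite{srinivas2012gaussian}. The variances $\sigma_{t-1}^2$ are exactly the GP posterior variances \eqref{eq:gp_var}, which do not depend on $q$ or on the observed values. Assuming $k(\bx,\bx)\le1$, the inequality $s\le C_2\log(1+s)$ for $s\in[0,\sigma_\varepsilon^{-2}]$ gives
\[
\sum_t\sigma_{t-1}^2(\bx_t)\le \frac{C_1}{4}\cdot2\gamma_T\le C_1\gamma_T.
\]
(The constant $C_1=8/\log(1+\sigma_\varepsilon^{-2})$ absorbs the factors; I will track these constants carefully so that they land on $\sqrt{C_1T\gamma_T}$ exactly.) This yields $R_T\le2\Lambda_T^{1/q}\sqrt{C_1T\gamma_T}$. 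Since $\Lambda_T=O(\log T+\log|\calX|+\log(1/\delta))$, the bound is $\tilde O\bigl((\log T)^{1/q}\sqrt{T\gamma_T}\bigr)$.

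\textbf{Sublinearity.} We have
\[
R_T/T\le2\sqrt{C_1\Lambda_T^{2/q}\gamma_T/T}\to0
\]
exactly when $(\log T)^{2/q}\gamma_T=o(T)$. For finite $\calX$, $\gamma_T\le\tfrac{|\calX|}{2}\log(1+T\sigma_\varepsilon^{-2})=O(\log T)$, since the information gain over a finite set is bounded via its at most $|\calX|$ distinct points. Hence $(\log T)^{2/q+1}=o(T)$ for every fixed $q>0$.

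The main obstacle is this finite-domain bound on $\gamma_T$: repeated queries must be handled by the rank of $\bK_A$ being at most $|\calX|$. The other delicate point is the legitimacy of Step~1, which rests entirely on assumption (\textbf{A2}) rather than on a frequentist model for $f$. I will state explicitly that the probability is over the surrogate's predictive law, as the assumption permits.
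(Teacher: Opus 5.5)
Your proposal is correct and follows the paper's proof sketch step for step. It union-bounds Lemma~\ref{lem:tail} with $\delta_t=6\delta/(\pi^2t^2)$ over $\bx\in\calX$ and $t\ge1$, derives the standard UCB step $r_t\le2\rho_t\sigma_{t-1}(\bx_t)$, uses that $\rho_t$ is nondecreasing, and then applies Cauchy--Schwarz together with the information-gain bound of \cite{srinivas2012gaussian}. Under $k(\bx,\bx)\le1$ that bound gives $\sum_t\sigma_{t-1}^2(\bx_t)\le C_1\gamma_T/4$, so your looser constant is harmless. Your rank argument $\gamma_T\le\tfrac{|\calX|}{2}\log(1+T\sigma_\varepsilon^{-2})$ also supplies the justification for the ``every fixed $q>0$'' clause, which the paper's sketch leaves implicit.
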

\begin{proof}[Proof sketch]
Union-bounding Lemma~\ref{lem:tail} over $\bx\in\calX$ and $t\ge1$
with $\delta_t=6\delta/(\pi^2t^2)$ yields valid confidence sets with
probability $\ge1-\delta$, and the standard UCB argument then gives
$r_t\le2\rho_t\sigma_{t-1}(\bx_t)$. Since $\sigma_{t-1}$ is the
posterior standard deviation in Eq.~\eqref{eq:qexpdbo_post}, the information-gain bound
$\sum_t\sigma_{t-1}^2(\bx_t)\le C_1\gamma_T$ applies, and with $\rho_t$
nondecreasing, the Cauchy--Schwarz inequality yields
$R_T\le2\rho_T\sqrt{C_1T\gamma_T}$ with
$\rho_T=\Lambda_T^{1/q}=O((\log T)^{1/q})$.
\end{proof}

\begin{corollary}[Cost of heavier predictive tails]
\label{cor:cost}
At $q{=}2$, $\rho_t=\Lambda_t^{1/2}$ and Eq.~\eqref{eq:regretbound} reduces to the classical rate $R_T=\tilde{O}(\sqrt{T\gamma_T\log T})$ \cite{srinivas2012gaussian}. For $q<2$, the confidence width $\Lambda_t^{1/q}$ in Eq.~\eqref{eq:rho} is wider by factor $\Lambda_t^{1/q-1/2}$, and the regret bound inherits exactly this factor while retaining the leading $\sqrt{T\gamma_T}$ term. Thus heavier predictive tails have a logarithmic exploration cost and preserve no-regret behavior whenever $(\log T)^{2/q}\gamma_T=o(T)$.
\end{corollary}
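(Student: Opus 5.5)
The corollary follows by specializing Theorem~\ref{thm:regret}, so I would argue directly from Eq.~\eqref{eq:regretbound} and Eq.~\eqref{eq:rho}, in three steps.

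\emph{Step 1: the case $q{=}2$.} Setting $q{=}2$ in Eq.~\eqref{eq:rho} gives $\rho_t=\Lambda_t^{1/2}$, which is exactly the GP-UCB schedule $\beta_t^{1/2}$ with $\beta_t=\Lambda_t$. Substituting into Eq.~\eqref{eq:regretbound} yields
\begin{equation*}
R_T\le 2\sqrt{C_1\,T\,\gamma_T\,\Lambda_T}.
\end{equation*}
Since $\Lambda_T=2\log(|\calX|T^2\pi^2/(6\delta))=4\log T+O(1)$ for fixed $|\calX|$ and $\delta$, this is $\tilde{O}(\sqrt{T\gamma_T\log T})$, the rate of \cite{srinivas2012gaussian}.

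\emph{Step 2: the factor for $q<2$.} I would write
\begin{equation*}
\Lambda_t^{1/q}=\Lambda_t^{1/2}\cdot\Lambda_t^{1/q-1/2}.
\end{equation*}
The first factor is the GP-UCB width, so the widening factor is $\Lambda_t^{1/q-1/2}$. This factor is $\ge1$ once $\Lambda_t\ge1$, which always holds here: $|\calX|\ge1$ and $\delta<1$ give $\Lambda_t\ge2\log(\pi^2/6)>0$, and for $t\ge2$ we have $\Lambda_t>1$. The same identity at $t=T$ shows that the bound $2\Lambda_T^{1/q}\sqrt{C_1T\gamma_T}$ equals the $q{=}2$ bound times $\Lambda_T^{1/q-1/2}$. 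The $\sqrt{T\gamma_T}$ dependence is therefore untouched, and the extra cost is $\Theta((\log T)^{1/q-1/2})$, which is logarithmic.

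\emph{Step 3: no-regret.} From the theorem,
\begin{equation*}
R_T/T\le 2\sqrt{C_1\,\Lambda_T^{2/q}\,\gamma_T/T}.
\end{equation*}
With $\Lambda_T=\Theta(\log T)$, the right-hand side tends to $0$ exactly when $(\log T)^{2/q}\gamma_T=o(T)$.

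The only delicate point is the comparison "wider by factor": it needs $\Lambda_t\ge1$ so that the factor is at least one, and that holds for all but trivial parameter values, as checked in Step 2. Everything else is substitution.
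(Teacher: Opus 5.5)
Your proposal is correct and follows the route the paper intends: the corollary has no separate proof and is read off Theorem~\ref{thm:regret} by setting $q=2$, factoring $\Lambda_T^{1/q}=\Lambda_T^{1/2}\Lambda_T^{1/q-1/2}$ with $\Lambda_T=\Theta(\log T)$, and dividing the bound by $T$, exactly as you do. One small wording slip: since $2\log(\pi^2/6)\approx0.995$, your claim that $\Lambda_t\ge1$ ``always holds'' fails in the degenerate case $t=1$, $|\calX|=1$, $\delta>0.997$; this is harmless because the bound involves only $\Lambda_T$, which exceeds $1$ for all $T\ge2$, as you yourself note.
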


\subsubsection{\texorpdfstring{$q$}{q}-Exponential Expected Improvement (\qei)}
\label{sec:qei}

Expected improvement (EI) \cite{jones1998efficient} evaluates
the expected positive improvement over an incumbent
$f^+=\max_i y_i$. We compute this expectation under the
univariate \qexpd{} predictive in Definition~\ref{def:predictive}.
For $\sigma_t(\bx)>0$, define
\[
  d=\frac{\mu_t(\bx)-f^+}{\sigma_t(\bx)}.
\]
The standardized predictive variable
$Z=(f(\bx)-\mu_t(\bx))/\sigma_t(\bx)$ has CDF
\[
  F_q(z)=\Phi\!\left(\operatorname{sgn}(z)|z|^{q/2}\right),
\]
where $\Phi$ is the standard Gaussian CDF and
$\operatorname{sgn}(0)=0$. Define its upper partial first moment
for $a\geq0$ as
\[
  M_q(a)=\frac{2^{1/q-1}}{\sqrt{\pi}}\,
  \Gamma\!\left(\frac12+\frac1q,\frac{a^q}{2}\right),
\]
where $\Gamma(s,x)=\int_x^\infty v^{s-1}e^{-v}\,dv$
is the upper incomplete gamma function. The acquisition is
\begin{equation}
  \begin{aligned}
    \alpha_{\qei}(\bx)
      &= \mathrm{E}\!\left[(f(\bx)-f^+)_+\mid\calD_t\right]\\
      &= \sigma_t(\bx)\bigl[dF_q(d)+M_q(|d|)\bigr].
  \end{aligned}
  \label{eq:qei}
\end{equation}
To obtain Eq.~\eqref{eq:qei}, symmetry gives
$\mathrm{E}[(d+Z)_+]=dF_q(d)+\int_{|d|}^\infty z p_q(z)\,dz$,
where $p_q$ is the density of $Z$. Substituting $v=z^q/2$
evaluates the remaining integral as $M_q(|d|)$.
Thus, \qei{} has an exact expression in standard special
functions that incorporates the predictive shape as well as
its scale. At $q=2$, $F_2(d)=\Phi(d)$ and
$M_2(|d|)=\phi(d)$, recovering Gaussian EI, where $\phi$
is the standard Gaussian density. For $\sigma_t(\bx)=0$,
we use $\alpha_{\qei}(\bx)=(\mu_t(\bx)-f^+)_+$.

Unlike UCB, EI admits no simple confidence-interval-based regret
argument even in the Gaussian case, and extending existing EI consistency
results \cite{bull2011convergence} to the \qexpd{} predictive is left for
future work. This acquisition changes how predictive uncertainty
is scored, while retaining the GP mean and observed incumbent.
Its robustness to corrupted observations therefore requires
empirical evaluation.

\begin{algorithm}[t]
\DontPrintSemicolon
\caption{\qexpdbo}
\label{alg:qexpdbo}
\KwIn{Budget $T$, kernel $k$, tail parameter $q$}
\KwOut{Best observed point $\bx^*$, value $y^*$}
Evaluate $n_0$ initial random points and form $\calD_0$\;
\For{$t=1$ \KwTo $T$}{
  Fit kernel hyperparameters on $\calD_{t-1}$ by maximizing
  Eq.~\eqref{eq:qed} ($q$ fixed) and compute
  $\mu_{t-1}(\cdot),\sigma_{t-1}^2(\cdot)$ via
  Eqs.~\eqref{eq:gp_mean}--\eqref{eq:gp_var}\;
  Select $\bx_t$ by maximizing \qucb (with $\rho_t$ from Eq.~\eqref{eq:rho})
  or \qei, via Eqs.~\eqref{eq:qucb}/\eqref{eq:qei}\;
  Evaluate $y_t=f(\bx_t)+\varepsilon_t$ and update $\calD_t=\calD_{t-1}\cup\{(\bx_t,y_t)\}$\;
}
\Return $\bx^*=\argmax_{(\bx_i,y_i)\in\calD_T}y_i$
\end{algorithm}

\section{Experiments}
\label{sec:experiments}

We evaluate \qexpdbo{} on two signal-processing applications with expensive and occasionally corrupted evaluations: robust beamformer tuning (spatial filtering) and adaptive filter tuning (temporal filtering). In both cases the black-box maps design parameters to SINR or misalignment under corrupted covariance estimation or adaptation noise and has no analytic form. We report the best true SINR for beamforming and misalignment reduction for leaky NLMS, focusing on robust black-box tuning rather than on robustifying the beamformer or filter themselves.

\subsection{Beamformer Tuning with Impulsive Outliers}
\label{sec:exp-beamforming}

We tune a robust minimum variance distortionless response (MVDR) beamformer
\cite{capon1969highresolution}, treating output SINR as a black-box objective
over four interacting design parameters: diagonal loading
\cite{carlson1988covariance,li2003robust}, look direction, RCB uncertainty radius
\cite{vorobyov2003worst}, and covariance shrinkage \cite{ledoit2004well}.
We consider an $M{=}16$ ULA with
$N{=}24$ snapshots, SOI at $\theta_s{=}0^\circ$ (SNR~$=10\,\mathrm{dB}$), and
$K{=}4$ interferers at $\{-35^\circ,25^\circ,55^\circ,4^\circ\}$
(INR~$=25\,\mathrm{dB}$), including a near-mainlobe jammer, with search ranges
$\log_{10}(\lambda/\sigma_e^2)\in[-4,1]$,
$\theta_{\mathrm{look}}\in[\theta_s\pm5^\circ]$,
$\epsilon_{\mathrm{RCB}}\in[0,0.5]$, and $\alpha\in[0,1]$. We compare
\emph{Clean} Gaussian snapshots with an \emph{Impulsive} regime in which burst
and Cauchy outliers (peak rate $\pi{=}0.10$ near the interferer DOAs, floor
$0.05$) corrupt both the sample covariance and the online SINR meter
\cite{cox1987robust}, together with meter noise
($\sigma{=}0.35\,\mathrm{dB}$) and occasional Cauchy spikes.

All methods use $n_0{=}5$ Sobol starts, $T{=}100$ queries, squared exponential kernel, L-BFGS acquisition maximization, and 10 seeds. Kernel hyperparameters are refit at each iteration by maximizing Eq.~\eqref{eq:qed} for \qexpdbo{}, a Gaussian likelihood for GP baselines, and a Student-$t$ likelihood for TP-EI, using zeroth-order adaptive perturbation (ZAP) \cite{suwandi2026breaking}, a gradient-free optimizer that mitigates the curse of dimensionality in hyperparameter training. Baselines are GP-UCB, GP-EI \cite{srinivas2012gaussian,jones1998efficient}, TP-EI ($\nu{=}10$) \cite{shah2014student}, and truncated-mean GP-UCB (TruncMean) \cite{chowdhury2019heavytailed}, each compared against \qei/\qucb at $q\in\{1.5,1.0,0.5\}$. 

Figure~\ref{fig:mvdr_curves} shows the best-achieved SINR over time, and
Table~\ref{tab:mvdr} reports the final values. Pairwise comparisons use a paired
Wilcoxon signed-rank test across seeds ($p<0.05$), under which the $q{=}1.5$
gains of \qexpdbo{} over the strongest baseline are significant in the
impulsive regime of both tasks.

\textbf{Clean.} With Gaussian snapshots the GP model is well specified, and the
methods largely agree. GP-UCB, TP-EI, TruncMean, and \qucb at $q{=}1.5$
finish within ${\approx}0.2\,\mathrm{dB}$ of each other, and exact \qei{}
at $q{=}1$ is best overall, so a mildly heavy-tailed surrogate is
essentially free when outliers are absent. At $q{=}0.5$, \qucb{}'s SINR
drops by ${\sim}1\,\mathrm{dB}$ against GP-UCB, as wider UCB intervals
cause excessive exploration (Corollary~\ref{cor:cost}), though exact
\qei{} at $q{=}0.5$ closes this gap.

\textbf{Impulsive.} Once snapshots and the SINR meter are corrupted, \qucb and
\qei at $q{=}1.5$ outperform TruncMean, the strongest baseline, by
${\approx}0.7\,\mathrm{dB}$, while GP-UCB and TP-EI degrade. The matched-$q$
score downweights isolated spikes when fitting $\theta$, and the
heavier-tailed predictive widens exploration so the search does not
lock onto a corrupted region. TruncMean only truncates observations
and keeps a Gaussian surrogate on $f$, so it robustifies the fit but
not the exploration. \qexpdbo remains
strong at $q{=}1$, whereas $q{=}0.5$ again over-explores.

\begin{figure}[t]
\centering
\includegraphics[width=\columnwidth]{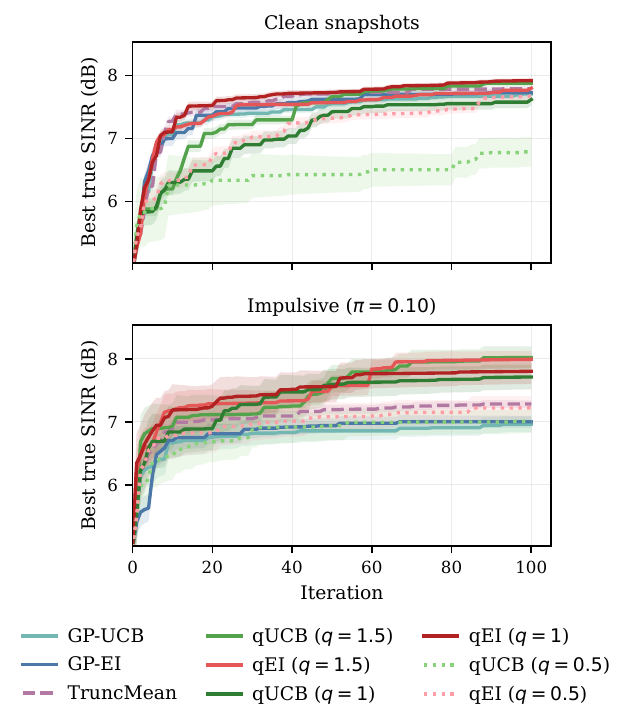}
\caption{Best true output SINR versus BO iteration (mean $\pm$ standard
error over 10 seeds). Top: clean Gaussian snapshots. Bottom: impulsive
corruption of the snapshots and the SINR meter.}
\label{fig:mvdr_curves}
\end{figure}
\begin{table}[t]
\centering
\caption{Final objective value (mean $\pm$ standard error over 10 seeds, higher
is better). Left: MVDR output SINR in dB. Right: NLMS misalignment reduction in
dB. Best entry in each column is bold.}
\label{tab:mvdr}
\setlength{\tabcolsep}{2.6pt}
\small
\begin{tabular}{lcccc}
\toprule
& \multicolumn{2}{c}{MVDR SINR} & \multicolumn{2}{c}{NLMS misalign.} \\
\cmidrule(lr){2-3}\cmidrule(lr){4-5}
Method & Clean & Impulsive & Clean & Impulsive \\
\midrule
GP-UCB & 7.71$\pm$0.12 & 6.96$\pm$0.37 & 30.4$\pm$0.22 & 13.9$\pm$0.61 \\
GP-EI & 7.73$\pm$0.17 & 7.00$\pm$0.28 & 30.5$\pm$0.19 & 14.2$\pm$0.52 \\
TP-EI & 7.71$\pm$0.17 & 6.87$\pm$0.32 & 30.2$\pm$0.26 & 13.6$\pm$0.66 \\
TruncMean & 7.79$\pm$0.11 & 7.28$\pm$0.18 & 30.3$\pm$0.21 & 14.7$\pm$0.38 \\
\qucb ($q{=}1.5$) & 7.88$\pm$0.15 & \textbf{8.02$\pm$0.53} & \textbf{30.6$\pm$0.18} & \textbf{15.9$\pm$0.44} \\
\qucb ($q{=}1$) & 7.62$\pm$0.28 & 7.71$\pm$0.59 & 30.1$\pm$0.31 & 15.4$\pm$0.57 \\
\qucb ($q{=}0.5$) & 6.79$\pm$0.70 & 7.01$\pm$0.56 & 28.4$\pm$0.74 & 13.8$\pm$0.83 \\
\qei ($q{=}1.5$) & 7.80$\pm$0.10 & 7.99$\pm$0.40 & 30.5$\pm$0.14 & 15.8$\pm$0.38 \\
\qei ($q{=}1$) & \textbf{7.92$\pm$0.11} & 7.80$\pm$0.60 & 30.4$\pm$0.20 & 15.4$\pm$0.55 \\
\qei ($q{=}0.5$) & 7.68$\pm$0.20 & 7.22$\pm$0.55 & 29.9$\pm$0.35 & 14.3$\pm$0.55 \\
\bottomrule
\end{tabular}
\end{table}

\subsection{Adaptive Filter Tuning with Impulsive Outliers}
\label{sec:exp-nlms}

We also consider tuning a leaky NLMS filter for system
identification \cite{mayyas1997leaky}. A length-$16$ plant is driven by white
Gaussian input, and the desired signal is observed through
Gaussian (\emph{Clean}) or $\epsilon$-contaminated Gaussian/Cauchy
(\emph{Impulsive}, $\pi{=}0.10$) noise. Impulsive outliers are the canonical
failure mode of LMS-family filters \cite{zou2000impulse}, since a single spike enters the tap
update directly. The black box is
$3$-dimensional, with $\log_{10}\mu\in[-3,0.3]$, $\log_{10}\delta\in[-6,0]$, and
leakage $\gamma\in[0,0.05]$, and it returns the steady-state misalignment
reduction $-10\log_{10}(\|\hat{\bw}-\bw^\star\|^2/\|\bw^\star\|^2)$ in dB, read
from a single run and corrupted by meter noise and occasional Cauchy spikes.

Table~\ref{tab:mvdr} (right) reports the final misalignment reduction.
The same Wilcoxon test as in \S~\ref{sec:exp-beamforming} finds the
$q{=}1.5$ gains statistically significant in the impulsive regime. 

\textbf{Clean.} With Gaussian observation noise the surrogates are hard
to separate: every method except \qucb{} at $q{=}0.5$ finishes within
${\approx}0.5\,\mathrm{dB}$ of the others and within ${\approx}1\,\mathrm{dB}$
of the $31\,\mathrm{dB}$ grid optimum. \qucb{} at $q{=}1.5$ is again
slightly ahead, and \qei{} at $q{=}1.5$ matches GP-EI to the reported
precision.

\textbf{Impulsive.} Under $\epsilon$-contaminated noise, all methods lose roughly half the attainable misalignment reduction, but
\qucb{} and \qei{} at $q{=}1.5$ recover $1.1$--$1.2\,\mathrm{dB}$ more
than TruncMean, the strongest baseline, and $q{=}1$ still finishes ahead
of it. \qei{} at $q{=}1.5$ finishes within $0.2\,\mathrm{dB}$ of
\qucb{}. As in the beamformer task, the matched-$q$
score keeps the acquisition from locking onto a corrupted region of $(\mu,\delta,\gamma)$.
\section{Conclusion}
\label{sec:conclusion}
We introduced \qexpdbo{}, whose univariate $q$-exponential predictive recovers the GP at $q=2$ and grows heavier tailed as $q$ decreases. From its stretched exponential tail bound, we prove a sublinear regret guarantee for \qucb{}, and we derive exact \qei{} under the same predictive via the Gaussian CDF and the upper incomplete gamma function. On beamformer and leaky NLMS tuning, $q\in[1,1.5]$ matches GP baselines on clean data ($q{=}1$ best for \qei{}) and outperforms the strongest baseline under impulsive outliers, while $q{=}0.5$ overexplores, so we recommend $q{=}1.5$ as a safe default and a smaller $q$ for heavily corrupted settings. Because the predictive modifies only the tail, \qexpdbo{} remains compatible with any kernel, including adaptively designed ones \cite{suwandi2025adaptive}. We fix $q$ before optimization and leave its joint or online learning \cite{zhang2023dataadaptive} to future work.

\bibliographystyle{IEEEbib}
\bibliography{refs}

\end{document}